\documentclass[journal,twoside,web]{ieeecolor}
\pagestyle{empty} 

\usepackage{lcsys}
\usepackage{cite}
\usepackage{amsmath,amssymb,amsfonts,amsthm}
\usepackage{algorithmic}
\usepackage{graphicx}
\usepackage{textcomp}
\usepackage[euler]{textgreek}
\usepackage[hidelinks]{hyperref}
\usepackage{cleveref}
\usepackage{multirow}
\usepackage{centernot}
\usepackage{caption}
\usepackage{subcaption}
\captionsetup[figure]{labelfont=bf, textfont=it}
\captionsetup[table]{labelfont=bf}

\usepackage{tabularray}
\usepackage{enumitem,kantlipsum}

\theoremstyle{definition}
\newtheorem{theorem}{Theorem}
\newtheorem{lemma}{Lemma}

\newtheorem{assumption}{Assumption}
\newtheorem{remark}{Remark}

\newcommand{\revised}[1]{\textcolor{red}{#1}}
    
\begin{document}

\title{Chance-Constrained Trajectory Planning with Multimodal Environmental Uncertainty}
\author{
Kai Ren, Heejin Ahn, and Maryam Kamgarpour	\thanks{ Ren is with the ECE department, University of British Columbia, Vancouver, BC, Canada. Ahn is with the ECE department, Seoul National University, South Korea. Kamgarpour is with the SYCAMORE Lab, École Polytechnique Fédérale de Lausanne (EPFL), Switzerland. Contact e-mail: {\tt\small renkai@ece.ubc.ca}.}
}
\maketitle
\thispagestyle{empty} 

\begin{abstract} 
We tackle safe trajectory planning under Gau-ssian mixture model (GMM) uncertainty. Specifically, we use a GMM to model the multimodal behaviors of obstacles' uncertain states. Then, we develop a mixed-integer conic approximation to the chance-constrained trajectory planning problem with deterministic linear systems and polyhedral obstacles.  When the GMM moments are estimated via finite samples, we develop a tight concentration bound to ensure the chance constraint with a desired confidence. Moreover, to limit the amount of constraint violation, we develop a Conditional Value-at-Risk (CVaR) approach corresponding to the chance constraints and derive a tractable approximation for 
known and estimated GMM moments. We verify our methods with state-of-the-art trajectory prediction algorithms and autonomous driving datasets. 
\end{abstract}

\begin{IEEEkeywords}
    Autonomous vehicles; Stochastic optimal control.
\end{IEEEkeywords}

\section{Introduction}
\IEEEPARstart{T}{he} safe operation of autonomous systems, such as self-driving cars, robots,
in uncertain environment is a central challenge in autonomy. For example, a self-driving car encounters other autonomous or human driven cars and needs to plan its trajectory to avoid collision despite the uncertainties in the future motion of these other vehicles. Our work addresses safe trajectory planning under environment uncertainty.

A chance-constrained program (CCP) is a common formulation to ensure safety in uncertain environments. Instead of enforcing the constraint for all uncertainty realizations, which often lead to conservative decisions, the chance-constrained formulation tolerates a small probability of constraint violation. Although CCPs are intractable due to the non-convex chance constraints, they can be tractably addressed for certain classes of probability distributions, such as Gaussian distribution with known moments \cite{blackmore2006, Blackmore2011,jha-jar18, Calafiore2006CC}. In practice, the exact Gaussian moments may be unknown and thus estimated from samples, e.g., sensor observations. In such cases, the CCP also has a tractable reformulation that guarantees safety with high confidence \cite{Lefkopoulos2021}. When the uncertainty's distribution cannot be captured by Gaussian, \cite{Han2022} considered the non-Gaussian uncertainty by mapping the probabilistic constraints to constraints on the moments of the state probability distribution.

In trajectory prediction problems, such as those arising in autonomous driving \cite{Salzmann2020CoRR, Rhinehart2019CoRR}, the probability distributions over the future positions of the road agents (vehicles, humans) are multimodal. The reason is that the agents often have different high-level intents in complex and interactive environments. For example, a vehicle may go straight or turn at an intersection.
Incorporating this additional information on the distribution can improve the performance of the planning algorithms.  

To account for the multimodal distribution in trajectory planning, Ahn \textit{et al}. \cite{Ahn_2022} developed a method to cluster the distribution samples corresponding to the different modes and then used a scenario-based approach \cite{Calafiore2006Scenario, campi2009} to provide safety guarantee with finite number of samples. The scenario-based approach does not   incorporate the prior knowledge on the true distribution. In particular, state-of-the-art trajectory prediction algorithms \cite{Salzmann2020CoRR, Rhinehart2019CoRR}, use a  Gaussian mixture model (GMM) to represent the distribution of the future vehicles' positions. Motivated by the prevalence of GMM in trajectory predictions, our work here focuses on developing a framework for safe trajectory planning under GMM.

Chance constraints under a GMM uncertainty can be equivalently formulated as a deterministic second-order cone constraint \cite{Hu2022}, similar to chance constraints with unimodal Gaussian distribution. As for applications, Yang \textit{et al}. \cite{yang2020} exploited the GMM to model the wind power uncertainty and solved a chance-constrained unit commitment problem. However, to our knowledge, chance constraints with GMM have not been applied to trajectory planning problems. 

Motivated by trajectory planning, we formulate a chance-constrained problem under GMM. To  manage the severity of constraint violation in safety-critical applications, we also consider the conditional value-at-risk (CVaR) \cite{Parys2016,Majumdar2017,Hakobyan2019, Barbosa2021} approximation of chance constraints. The CVaR constraint implies the original chance constraint and furthermore, limits the amount of violation in case of constraint violation. In both chance and CVaR constraints, we consider the cases in which the GMM moments are known or estimated from samples. 

The main contributions of this paper are as follows.
\begin{itemize} 
    \item We  formulate the chance-constrained trajectory planning problem under GMM, derive a deterministic formulation of the problem and its conservative approximation via CVaR in the case of known moments.
    
    \item For the case of moments learned through samples, we derive a tight GMM moment concentration bound. We use this bound to probabilistically guarantee the feasibility of the chance constraints and their CVaR approximations.
    
    \item We test our methods on a real-world autonomous driving \textit{nuScenes} dataset \cite{nuScenes}. We show that modelling the uncertain parameter's distribution with GMM induces less conservative motion than unimodal Gaussian modelling. The CVaR-constrained planner limits the constraint violation amount while ensuring safety with high probability.
\end{itemize}

The rest of the paper is organized as follows. In Section \ref{section:GMMCCandRobustification} we formulate the chance constraints with GMM uncertain parameter, and present the deterministic reformulations and approximations via CVaR for known and learned moments. Section \ref{section:motionplanning} reformulates a chance-constrained trajectory planning problem and its CVaR approximation as computationally tractable problems and exploit the derived moment concentration bound to guarantee safety. Section \ref{section:casestudy} demonstrates our methods in  real-world autonomous driving case studies.

\textit{Notation:} A Gaussian distribution with mean $\mu$ and covariance matrix $\Sigma$ is denoted as $\mathcal{N}(\mu, \Sigma)$. By $\text{\textPsi}^{-1}(\cdot)$ and $\Phi(\cdot)$, we denote the inverse cumulative distribution function and probability density function of the standard Gaussian distribution $\mathcal{N}(0, 1)$. By $(\cdot)_+$, we denote the operator $\max \{ \cdot \; , 0 \}$. We denote a set of consecutive integers $\{a, a+1, \dots,b\}$ by $\mathbb{Z}_{a:b}$. The matrix $A$ being positive definite is denoted as $A \succ 0$. We denote the conjunction by $\bigwedge$ and the disjunction by $\bigvee$. 

\section{Chance Constraint with GMM Uncertainty} \label{section:GMMCCandRobustification}
The trajectory planning problem aims to prevent the ego vehicle (EV) from colliding with other vehicles (OVs). Due to the uncertain future positions of OVs, we enforce the probability of constraint violation to be bounded by a prescribed threshold $\epsilon \in (0, 0.5)$. Let $x \in \mathbb{R}^{n_x}$ encode the state of the EV and $\delta$ be the uncertain parameter that follows a distribution $p_*$. Let $C(x, \delta)$ be the constraint function and $C(x, \delta) \leq 0$ represent the satisfaction of the constraint. The chance constraint can be written as follows.
\begin{equation} \label{eq:singlecc}
    \mathbb{P}_{\delta \sim p_*}(C(x, \delta) \leq 0) \geq 1-\epsilon.
\end{equation}

\subsection{Linear Constraint}
Let $\delta \in \mathbb{R}^{n}$ encode the uncertain location of an edge of the OV. The dimension of $\delta$ is $n=n_x+1$. Let $\Tilde{x} = [x^{\intercal} \;\; 1 ]^{\intercal}$. The EV being away from an edge of the OV can be represented as a linear constraint: ${\delta}^{\intercal}\Tilde{x} \leq 0$. A 2-dimensional space example, i.e. $n_x = 2$, is shown in Fig.~\ref{fig:linearconstraint}. Motivated by the constraints arising in EV planning, we consider the following assumption: \vspace{-2mm}

\begin{assumption} \label{assumption:guassianConstraint}
     The constraint function is in linear product form: $C(x, \delta) = \delta^{\intercal} \Tilde{x} \in \mathbb{R}$.
\end{assumption}

\subsection{Chance and CVaR Constraints under GMM}
Let $z \in \mathbb{R}^{n}$ be a random variable in the same dimension of $\delta$. The GMM with $K$ modes can be represented as follows.
\begin{equation*}
    p_*(z) = \sum_{k=1}^{K} \pi_k p_k(z), \; \;\;\;\; \sum_{k=1}^{K} \pi_k = 1, 
\end{equation*}
where each mode $p_k(z)$ is a unimodal Gaussian distribution $\mathcal{N}(\mu_k, \Sigma_k)$. For $p_*$ being GMM, the chance constraint \eqref{eq:singlecc} can be equivalently decomposed as follows \cite{Hu2022}.
\begin{subequations}
    \begin{alignat} {2}
     & \mathbb{P}_{\delta \sim p_k}(\delta^{\top}\Tilde{x} \leq 0) \geq 1-\epsilon_k, \;\;\;\;\; \forall k \in \mathbb{Z}_{1:K}, \label{constraint:CCsingleGMMmode} \\
     & \sum_{k=1}^{K} \pi_k \epsilon_k = \epsilon. \label{subconstraint:riskForgmm}
    \end{alignat}
\end{subequations}

\begin{figure}[!b]
    \centering \vspace{-1mm}
    \includegraphics[width=6.5cm]{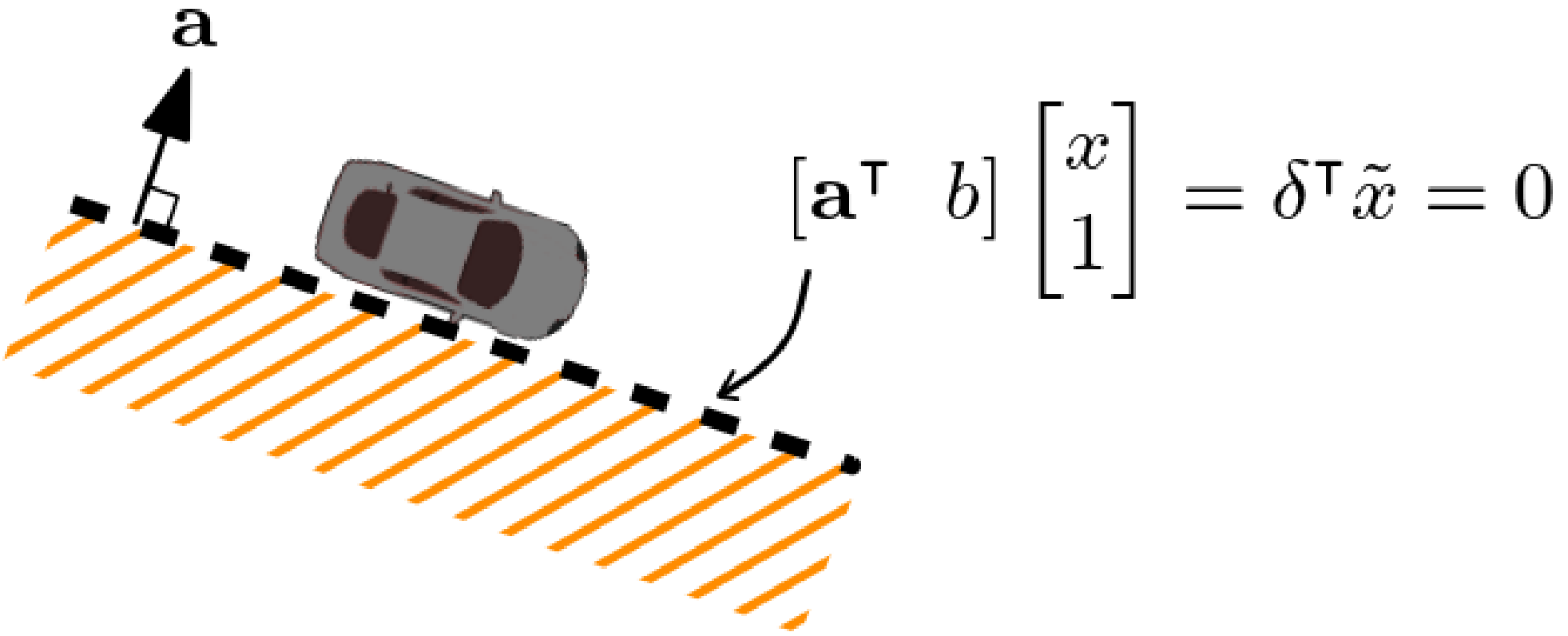}
    \caption{Linear constraint in 2-dimensional space}
    \label{fig:linearconstraint}
\end{figure}

As chance constraints cannot distinguish severe and mild constraint violations, we also consider a risk-aware approximation of the chance constraints in \eqref{constraint:CCsingleGMMmode} based on Conditional Value-at-risk (CVaR), where for the $k^{th}$ mode
\begin{equation} \label{eq:singlecvar}
    \underset{\delta \sim p_k}{\text{CVaR}_{\epsilon_k}}\left(\delta^{\top}\Tilde{x}\right) := 
    \inf_{\alpha>0} \left\{ \frac{1}{\epsilon_k} \mathbb{E}[(\delta^{\top}\Tilde{x} + \alpha)_+] - \alpha \right\} \leq 0.
\end{equation}

CVaR evaluates the expected constraint violation amount among the $\epsilon$-worst constraint values. CVaR constraint is a conservative approximation of a chance constraint, i.e. $\eqref{eq:singlecvar}\implies\mathbb{P}_{\delta \sim p_k}(\delta^{\top}\Tilde{x} \leq 0) \geq 1-\epsilon_k$. Hence, we can approximate \eqref{constraint:CCsingleGMMmode} with the following CVaR constraints:
\begin{equation} \label{constraint:CVaRsingleGMMmode}
    \underset{\delta \sim p_k}{\text{CVaR}_{\epsilon_k}}(\delta^{\top}\Tilde{x}) \leq 0, \;\;\;\;\; \forall k \in \mathbb{Z}_{1:K}.
\end{equation}

In the rest of the paper, we will approximate \eqref{eq:singlecc} via \eqref{eq:singlecc} $\Leftrightarrow$ \eqref{constraint:CCsingleGMMmode} $\wedge$ \eqref{subconstraint:riskForgmm} $\Leftarrow$ \eqref{constraint:CVaRsingleGMMmode} $\wedge$ \eqref{subconstraint:riskForgmm} for the risk-aware approximation.

\subsection{Deterministic reformulation \& Moment robustification} 
With Assumption \ref{assumption:guassianConstraint} and GMM uncertain parameter, we can deterministically reformulate the chance and CVaR constraints.
\begin{lemma} \label{lemma:gmmreformulation}
When the GMM moments $\mu_k$ and $\Sigma_k$ are known for all $k \in \mathbb{Z}_{1:K}$, the chance constraints \eqref{constraint:CCsingleGMMmode} and the CVaR approximation \eqref{constraint:CVaRsingleGMMmode} can be equivalently reformulated as the following second-order cone constraint:
\begin{equation} \label{constraint:gmmCC&CVaRDeterministic}
    \Gamma_k \sqrt{\Tilde{x}^{\intercal} \Sigma_k \Tilde{x}} + \mu_k^{\intercal} \Tilde{x} \leq 0, \;\;\;\;\; \forall k \in \mathbb{Z}_{1:K},
\end{equation}
where $\Gamma_k=\text{\textPsi}^{-1}(1-\epsilon_k)$ for chance constraint \cite{Calafiore2006CC} and $\Gamma_k=\Phi\left(\text{\textPsi}^{-1}(1-\epsilon_k)\right)/\epsilon_k$ for the CVaR approximation \cite{norton2019}.
\end{lemma}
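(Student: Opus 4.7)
The plan is to reduce the $K$-mode multivariate problem to $K$ decoupled scalar Gaussian problems, since each mode's constraint in \eqref{constraint:CCsingleGMMmode} and \eqref{constraint:CVaRsingleGMMmode} depends only on $p_k$. Under Assumption \ref{assumption:guassianConstraint}, for each mode $k$ the scalar random variable $Y_k := \delta^{\intercal}\tilde{x}$ with $\delta \sim \mathcal{N}(\mu_k,\Sigma_k)$ is a univariate Gaussian with mean $m_k := \mu_k^{\intercal}\tilde{x}$ and variance $\sigma_k^2 := \tilde{x}^{\intercal}\Sigma_k \tilde{x} \geq 0$. I would establish this as a one-line lemma (affine transformations of Gaussians are Gaussian) and then handle the chance and CVaR cases in parallel.

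For the chance case, I would standardize: $\mathbb{P}(Y_k \leq 0) = \text{\textPsi}\!\left(-m_k/\sigma_k\right)$ when $\sigma_k > 0$, and then since $1-\epsilon_k > 1/2$ and $\text{\textPsi}$ is strictly increasing, \eqref{constraint:CCsingleGMMmode} is equivalent to $-m_k/\sigma_k \geq \text{\textPsi}^{-1}(1-\epsilon_k)$, i.e., $\text{\textPsi}^{-1}(1-\epsilon_k)\sqrt{\tilde{x}^{\intercal}\Sigma_k\tilde{x}} + \mu_k^{\intercal}\tilde{x} \leq 0$, which is \eqref{constraint:gmmCC&CVaRDeterministic} with $\Gamma_k = \text{\textPsi}^{-1}(1-\epsilon_k)$. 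The degenerate case $\sigma_k = 0$ would be handled separately, where the constraint reduces to $\mu_k^{\intercal}\tilde{x} \leq 0$.

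For the CVaR case, I would evaluate the infimum in \eqref{eq:singlecvar} in closed form for the scalar Gaussian $Y_k$. Differentiating $\alpha \mapsto \frac{1}{\epsilon_k}\mathbb{E}[(Y_k+\alpha)_+] - \alpha$ in $\alpha$ gives the optimality condition $\mathbb{P}(Y_k + \alpha \geq 0) = \epsilon_k$, whose solution is the Value-at-Risk $\alpha^{\star} = -m_k - \sigma_k\,\text{\textPsi}^{-1}(\epsilon_k) = -m_k + \sigma_k\,\text{\textPsi}^{-1}(1-\epsilon_k)$. Plugging back in and using the tail identity $\int_c^{\infty} t\,\Phi(t)\,dt = \Phi(c)$ for the standard Gaussian density, a short calculation gives $\text{CVaR}_{\epsilon_k}(Y_k) = m_k + \sigma_k\,\Phi\!\left(\text{\textPsi}^{-1}(1-\epsilon_k)\right)/\epsilon_k$, so \eqref{constraint:CVaRsingleGMMmode} becomes \eqref{constraint:gmmCC&CVaRDeterministic} with $\Gamma_k = \Phi(\text{\textPsi}^{-1}(1-\epsilon_k))/\epsilon_k$.

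The main obstacle is the CVaR closed-form derivation; it is standard and is the content of the result cited from \cite{norton2019}, but care must be taken to align conventions (upper-tail vs lower-tail CVaR, level $\epsilon_k$ vs $1-\epsilon_k$, and whether the constraint $C(x,\delta) \leq 0$ treats large $\delta^{\intercal}\tilde{x}$ as the bad event) so that the final $\Gamma_k$ matches the stated expression. Once the scalar reduction is in hand, everything else is monotone rearrangement and a direct appeal to the standard Gaussian VaR/CVaR identities.
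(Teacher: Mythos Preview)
Your approach is correct and is precisely the standard derivation behind the cited results. Note, however, that the paper itself does \emph{not} prove Lemma~\ref{lemma:gmmreformulation}: it simply states the reformulation and points to \cite{Calafiore2006CC} for the chance-constraint coefficient and to \cite{norton2019} for the CVaR coefficient. Your sketch therefore supplies what the paper omits, and does so along exactly the lines one would expect from those references---reduce each mode to the scalar Gaussian $Y_k=\delta^{\intercal}\tilde{x}\sim\mathcal{N}(\mu_k^{\intercal}\tilde{x},\,\tilde{x}^{\intercal}\Sigma_k\tilde{x})$, invert $\text{\textPsi}$ for the chance case, and evaluate the Rockafellar--Uryasev infimum for the CVaR case.

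One minor slip to fix when you write it up: from your optimality condition $\mathbb{P}(Y_k+\alpha\geq 0)=\epsilon_k$ one obtains $-\alpha^{\star}=m_k+\sigma_k\,\text{\textPsi}^{-1}(1-\epsilon_k)$, hence $\alpha^{\star}=-m_k-\sigma_k\,\text{\textPsi}^{-1}(1-\epsilon_k)$, not $-m_k+\sigma_k\,\text{\textPsi}^{-1}(1-\epsilon_k)$ as you wrote. This sign error does not propagate---your final CVaR expression $m_k+\sigma_k\,\Phi(\text{\textPsi}^{-1}(1-\epsilon_k))/\epsilon_k$ is correct---but the intermediate formula should be corrected. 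You may also want to remark that the paper's restriction to $\alpha>0$ in \eqref{eq:singlecvar} is innocuous: whenever the resulting constraint \eqref{constraint:gmmCC&CVaRDeterministic} holds, the unconstrained minimizer $\alpha^{\star}$ is nonnegative, so the two infima agree.
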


In practice, the exact GMM moments $(\mu_k, \Sigma_k)$ are often not available. Hence, we also consider the case when the GMM moments are estimated from $N_s$ samples of $\delta$. 
\begin{remark}
Given a set of samples $\{\delta_1, \ldots, \delta_{N_s}\}$, in order to estimate $(\mu_k, \Sigma_k)$ for each GMM mode, we first need to determine the mode that each sample belongs to. Existing tools such as the expectation maximization method \cite{bishop2006} can split data into $K$ modes and determine which mode each sample belongs to. For trajectory planning scenarios, the state-of-the-art trajectory prediction model \cite{Salzmann2020CoRR} uses latent variables to encode the multimodal intents corresponding to each predicted sample trajectory, which categorizes the sample trajectories into $K$ modes. The existence of tools for categorizing samples into GMM modes motivate the following assumption. The relaxation of this assumption remains as a future work.

\begin{assumption} \label{assumption:gmmclustering}
    For a set of samples $\{\delta_1, \dots, \delta_{N_s}\}$, the mode $k$ where each sample belongs and the weight of each GMM mode $\pi_k$ are known for all $k \in \mathbb{Z}_{1:K}$.
\end{assumption}
\end{remark}

Now we have samples from each GMM mode. We denote $N_k$ as the sample size of the $k^{th}$ GMM mode. We can estimate $\mu_k$ and $\Sigma_k$ based on the $N_k$ samples. We denote the estimated moments as $\hat{\mu}_{k}$ and $\hat{\Sigma}_{k}$. We will use a moment concentration bound to robustify \eqref{constraint:gmmCC&CVaRDeterministic} against the moment estimation error.
\begin{theorem} \label{theorem:gmmrobustification}
Consider the case in which the moments of each Gaussian mode are estimated from $N_k$ samples. If the following inequality holds, then \eqref{constraint:gmmCC&CVaRDeterministic} holds with probability at least $1-2\beta$, where $\beta \in (0, 1)$ is a safety tolerance.
    \begin{equation} \label{constraint:gmmMRAconstraint}
        \Gamma_{k} \sqrt{(1+r_{2,k})(\Tilde{x}^{\intercal}\hat{\Sigma}_{k}\Tilde{x})} + r_{1, k} (x) + \hat{\mu}_{k}^{\intercal} \Tilde{x} \leq 0, \;\; \forall k \in \mathbb{Z}_{1:K}.
    \end{equation}
where $r_{1,k}$ and $r_{2,k}$ are defined as
    \begin{subequations}
        \begin{alignat}{2}
           & r_{1, k}(x) := \sqrt{\frac{T^{2}_{1, N_k-1}(1-\beta)}{N_k}} \sqrt{\Tilde{x}^{\intercal}\hat{\Sigma}\Tilde{x}}, \label{eq:1dMeanCbound} \\
           & r_{2, k} := \max \left\{|1-\frac{N_k-1}{\chi^{2}_{N_k-1, 1-\beta/2}}|, |1-\frac{N_k-1}{\chi^{2}_{N_k-1, \beta/2}}| \right\}.  \label{eq:covarianceCbound}
        \end{alignat}
    \end{subequations}
Here, $T^{2}_{a, b}(p)$ denotes the $p$-th quantile of the Hotelling’s T-squared distribution with parameters $a$ and $b$, and $\chi^{2}_{k, p}$ is the $p$-th quantile of the $\chi^2$-distribution with $k$ degrees of freedom. 
\end{theorem}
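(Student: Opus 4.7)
The plan is to exploit Assumption 1 so that the relevant random variable is the scalar projection $Y_k := \delta^{\intercal}\tilde{x}$, which under $p_k$ is univariate Gaussian with mean $\mu_k^{\intercal}\tilde{x}$ and variance $\tilde{x}^{\intercal}\Sigma_k\tilde{x}$. The target inequality \eqref{constraint:gmmCC&CVaRDeterministic} bounds a linear combination of these two moments, so the proof reduces to controlling the estimation error on the projected mean and the projected variance separately, then combining via a union bound.

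First I would handle the mean. Conditionally on the $N_k$ samples from mode $k$, the empirical projected mean and variance are $\hat\mu_k^{\intercal}\tilde{x}$ and $\tilde{x}^{\intercal}\hat\Sigma_k\tilde{x}$. A standard result (the univariate Student's $t$ statistic, equivalently Hotelling's $T^2_{1,N_k-1}$ for one-dimensional data) gives
\begin{equation*}
    \frac{N_k\,(\hat\mu_k^{\intercal}\tilde{x}-\mu_k^{\intercal}\tilde{x})^2}{\tilde{x}^{\intercal}\hat\Sigma_k\tilde{x}} \sim T^2_{1,N_k-1}.
\end{equation*}
Taking the $(1-\beta)$-quantile and a square root yields $|\mu_k^{\intercal}\tilde{x}-\hat\mu_k^{\intercal}\tilde{x}|\le r_{1,k}(x)$ with probability at least $1-\beta$, which is the definition \eqref{eq:1dMeanCbound}.

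Second I would handle the variance. By Cochran's theorem applied to the projected samples, $(N_k-1)\tilde{x}^{\intercal}\hat\Sigma_k\tilde{x}/\tilde{x}^{\intercal}\Sigma_k\tilde{x}\sim\chi^2_{N_k-1}$. A two-sided bound at levels $\beta/2$ on each tail gives, with probability at least $1-\beta$,
\begin{equation*}
    \frac{N_k-1}{\chi^2_{N_k-1,\,1-\beta/2}}\,\tilde{x}^{\intercal}\hat\Sigma_k\tilde{x}\;\le\;\tilde{x}^{\intercal}\Sigma_k\tilde{x}\;\le\;\frac{N_k-1}{\chi^2_{N_k-1,\,\beta/2}}\,\tilde{x}^{\intercal}\hat\Sigma_k\tilde{x}.
\end{equation*}
The right inequality is exactly $\tilde{x}^{\intercal}\Sigma_k\tilde{x}\le(1+r_{2,k})\tilde{x}^{\intercal}\hat\Sigma_k\tilde{x}$ by the definition \eqref{eq:covarianceCbound} (the $\max$ ensures $r_{2,k}$ dominates the worst-case multiplicative deviation on either side).

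Finally I would combine the two events by the union bound so that both inequalities hold with probability at least $1-2\beta$. On that event, using $\Gamma_k\ge 0$ (true for $\epsilon_k<0.5$ in both the chance and CVaR cases by Lemma 1), the true-moment expression is dominated by the empirical-moment expression:
\begin{equation*}
    \Gamma_k\sqrt{\tilde{x}^{\intercal}\Sigma_k\tilde{x}}+\mu_k^{\intercal}\tilde{x}\;\le\;\Gamma_k\sqrt{(1+r_{2,k})\tilde{x}^{\intercal}\hat\Sigma_k\tilde{x}}+\hat\mu_k^{\intercal}\tilde{x}+r_{1,k}(x),
\end{equation*}
so if \eqref{constraint:gmmMRAconstraint} holds then \eqref{constraint:gmmCC&CVaRDeterministic} holds with probability at least $1-2\beta$, as claimed. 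The main obstacle I anticipate is purely notational rather than technical: matching the $T^2_{1,N_k-1}$ form (rather than a simpler $t_{N_k-1}$ statement) and carefully justifying the two-sided chi-squared bound as a one-sided upper bound on the variance while charging both tails $\beta/2$ to keep the final budget $2\beta$.
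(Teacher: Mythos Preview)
Your per-mode argument is essentially the paper's: the paper cites \cite{Lefkopoulos2021} for the mean and covariance concentration bounds and then specializes the mean bound to dimension $n=1$ using Assumption~\ref{assumption:guassianConstraint}, while you derive the same two bounds directly from the Hotelling $T^{2}_{1,N_k-1}$ and $\chi^{2}_{N_k-1}$ distributions of the projected statistics. Your union-bound combination of the two events matches the paper's $(1-\beta)^2>1-2\beta$ step.

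The one missing ingredient is the aggregation over the $K$ modes. Both \eqref{constraint:gmmCC&CVaRDeterministic} and \eqref{constraint:gmmMRAconstraint} are $\forall k\in\mathbb{Z}_{1:K}$ statements, and the theorem asserts that the \emph{full} set of inequalities \eqref{constraint:gmmCC&CVaRDeterministic} holds with probability at least $1-2\beta$, not merely each one individually. Your argument establishes only the per-mode implication; a naive union bound across modes would yield $1-2K\beta$. The paper avoids this loss via a mixture argument in its final paragraph: since $\delta$ lies in mode $k$ with probability $\pi_k$, and on that mode the robustified constraint implies the true-moment constraint with probability at least $1-2\beta$, the overall probability is $\sum_{k=1}^{K}\pi_k(1-2\beta)=1-2\beta$. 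You should include this step to match the stated conclusion.
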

\begin{proof}
Consider the case when $\delta$ lies in the $k^{th}$ mode of the GMM. When the moments $\mu_k$ and $\Sigma_k$ are estimated from $N_k$ samples of $\delta$, with probability $1-\beta$ \cite{Lefkopoulos2021}, 
    \begin{subequations}
        \begin{alignat} {2}
        & \|\hat{\mu}_k - \mu_k\|_2 \leq R_{1, k} := \sqrt{\frac{T^{2}_{n, N_k-1}(1-\beta)}{N_k \lambda_{min}(\hat{\Sigma}^{-1})}}, \label{eq:ndMeanCbound} \\
        &  \|\Tilde{x}^{\intercal}\Sigma_k\Tilde{x} \| \leq (1+r_{2, k}) \cdot  \|\Tilde{x}^{\intercal}\hat{\Sigma}_k\Tilde{x} \|. \label{eq:covarianceRobustification}
        \end{alignat}
    \end{subequations}
        
    To ensure feasibility of the exact chance constraint reformulation \eqref{constraint:gmmCC&CVaRDeterministic}, we need to robustify the constraint by compensating the gaps between exact $(\mu_k, \Sigma_k)$ and estimated $(\hat{\mu}_{k}, \hat{\Sigma}_{k})$ as shown in \eqref{eq:ndMeanCbound} and \eqref{eq:covarianceRobustification}. Based on Assumption \ref{assumption:guassianConstraint}, the constraint function is $C(x, \delta) = \delta^\intercal\Tilde{x} \sim \mathcal{N}(\mu_k^{\intercal} \Tilde{x}, \sqrt{\Tilde{x}^{\intercal} \Sigma_k \Tilde{x}}^2)$. Hence, we can directly robustify the gap between the true $\mu_k^{\intercal} \Tilde{x} \in \mathbb{R}$ and estimated $\hat{\mu}_k^{\intercal} \Tilde{x}\in \mathbb{R}$ constraint value. To do this, we substitute ($\mu_k$, $\Sigma_k$) with ($\mu_k^{\intercal} \Tilde{x}$, $\sqrt{\Tilde{x}^{\intercal} \Sigma_k \Tilde{x}}^2$) and $n=1$ in \eqref{eq:ndMeanCbound}, which gives us $|\hat{\mu}_k^{\intercal} \Tilde{x} - \mu_k^{\intercal} \Tilde{x}| \leq r_{1,k} (x)$ with probability $1-\beta$. Based on this result and \eqref{eq:covarianceRobustification}, we can robustify constraint \eqref{constraint:gmmCC&CVaRDeterministic} by \eqref{constraint:gmmMRAconstraint} when the moments are estimated from samples.
    
    The uncertain parameter $\delta$ lies in the $k^{th}$ GMM mode with a probability $\pi_k$. When $\mu_k$ and $\Sigma_k$ are estimated from samples, a solution satisfying the $k^{th}$ constraint in \eqref{constraint:gmmMRAconstraint} satisfies the $k^{th}$ constraint in \eqref{constraint:gmmCC&CVaRDeterministic} with probability $(1-\beta)^2 > 1-2\beta$. Consider all $K$ modes, a feasible solution of \eqref{constraint:gmmMRAconstraint} is feasible to \eqref{constraint:gmmCC&CVaRDeterministic} with probability at least $\sum_{k=1}^{K} [\pi_k (1-2\beta)] = 1-2\beta$.
\end{proof}

Previous work \cite{Lefkopoulos2021} used \eqref{eq:ndMeanCbound} to robustify the mean estimation error. However, it did not consider the linear constraint structure arising from the trajectory planning problems. We observe that \eqref{eq:1dMeanCbound} provides a tighter robustification on the mean estimation error. Later, we will show with real-world trajectory planning examples that the robustified chance and CVaR-constrained planners yield infeasible solutions with the bound \eqref{eq:ndMeanCbound}, while the new bound \eqref{eq:1dMeanCbound} enables feasible solutions.

\begin{remark}
The mean concentration bound in \eqref{eq:1dMeanCbound} provides a tighter upper bound for $\mu_k^{\intercal} \Tilde{x}$ than that given in \eqref{eq:ndMeanCbound} for the following reason. For brevity, we omit the subscript $k$ for the rest of this remark. The concentration bound in \eqref{eq:ndMeanCbound} provides an upper bound on $\mu^\intercal \Tilde{x}$ as follows, with probability $1-\beta$.
    \begin{equation} \label{eq:ndMeanRobustification}
        \begin{aligned}
            \mu^{\intercal}\Tilde{x} & \leq \hat{\mu}^{\intercal}\Tilde{x} + R_{1}  \|\Tilde{x}\|_2 = \hat{\mu}^{\intercal}\Tilde{x} + C_1 h_1(x) \\
            & :=  \hat{\mu}^{\intercal}\Tilde{x} + \sqrt{\frac{T^{2}_{n, N_k-1}(1-\beta)}{N_k}} \sqrt{\frac{\Tilde{x}^{\intercal}\Tilde{x}}{\lambda_{min}(\hat{\Sigma}^{-1})}}.
        \end{aligned}
    \end{equation}
    With the concentration bound in \eqref{eq:1dMeanCbound} we can bound $\mu^\intercal \Tilde{x}$ from above as follows, with probability $1-\beta$.
    \begin{equation} \label{eq:1dMeanRobustification}
        \begin{aligned}
            \mu^\intercal \Tilde{x} & \leq \hat{\mu}^{\intercal} \Tilde{x} + r_{1}(x) = \hat{\mu}^{\intercal}\Tilde{x} + C_2 h_2(x)\\
            & := \hat{\mu}^{\intercal}\Tilde{x} + \sqrt{\frac{T^{2}_{1, N_k-1}(1-\beta)}{N_k}} \sqrt{\Tilde{x}^{\intercal}\hat{\Sigma}\Tilde{x}}.
        \end{aligned}
    \end{equation}
    We empirically observed that when $n \geq 1$, $C_1 \geq C_2$. Fig. \ref{fig:boundConstantComparison} shows a comparison of $C_1$ and $C_2$ for different $\beta, N_k$ and $n$. \vspace{1mm}
    
    \begin{figure}[ht] 
        \centering
        \includegraphics[width=0.5\textwidth]{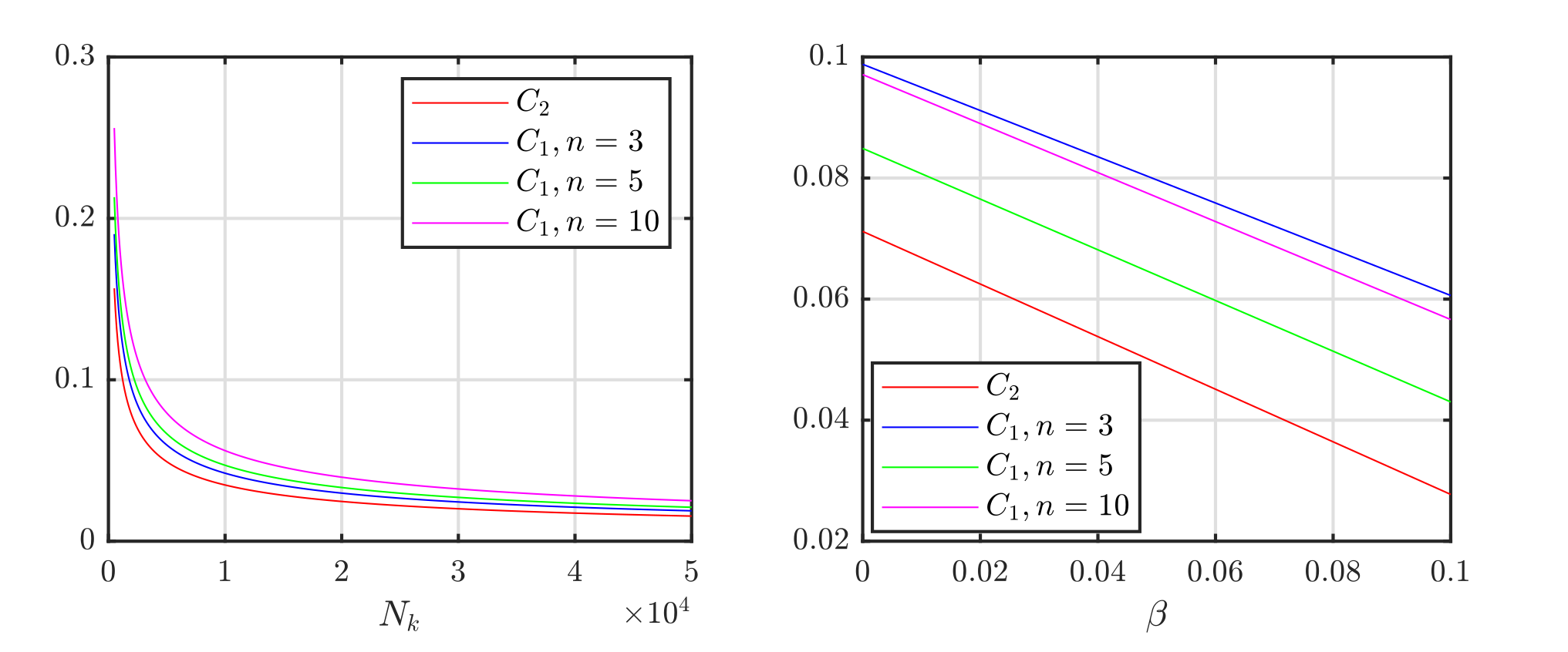} \vspace{-3.5mm}
        \caption{Comparison of $C_1$, $C_2$ for fixed $\beta = 10^{-3}$ and varying $N_k$ (left), fixed $N_k = 5000$ and varying $\beta$ (right).}
        \label{fig:boundConstantComparison}
    \end{figure}
    
   Now, as $\hat{\Sigma}$ is symmetric and $\hat{\Sigma} \succ 0$, we have:
    \small{\begin{equation*}
        \begin{aligned}
        \frac{\Tilde{x}^\intercal\Tilde{x}}{\lambda_{min}(\hat{\Sigma}^{-1})} =\lambda_{max}(\hat{\Sigma}) \cdot \Tilde{x}^\intercal\Tilde{x} \geq \Tilde{x}^\intercal \hat{\Sigma} \Tilde{x}.
        \end{aligned}
    \end{equation*}}
    Thus, $h_1(x) \geq h_2(x)$. We observe $C_1 h_1(x) \geq C_2 h_2(x)$, and the upper bound in \eqref{eq:1dMeanRobustification} appears to be tighter than that in \eqref{eq:ndMeanRobustification}.
\end{remark} 

\section{Chance-Constrained Trajectory Planning} \label{section:motionplanning}

We consider trajectory planning scenarios, in which the objective is to minimize a cost function (e.g., fuel/distance to the destination) while avoiding collision with any other vehicles (OVs) throughout the planning horizon. We consider the ego vehicle (EV) as a linear discrete-time system
\begin{equation} \label{eq:systemeq}
    x_{t+1} = A_t x_t + B_t u_t,
\end{equation}
where $x_t \in \mathcal{X} \in \mathbb{R}^{n_x} $ and $u_t \in \mathcal{U} \in \mathbb{R}^{n_u}$ are the state and input at time $t$, and $A_t \in \mathbb{R}^{n_x \times n_x}$ and $B_t \in \mathbb{R}^{n_x \times n_u}$ are system's dynamics matrices. The control inputs are constrained to a time-invariant convex set $\mathcal{U}$. With an initial state $x_0$ and a planning horizon $N$, the trajectory planner generates an input sequence $\textbf{u} = (u_0, u_1, \dots, u_{N-1})$. Given the initial state $x_0$ and the state transition function \eqref{eq:systemeq}, the input sequence $\textbf{u}$ would lead to a state trajectory $\textbf{x} = (x_0, x_1, \dots, x_{N})$ that should be free from collisions with the OVs. Note that we use a linear time-varying model, which is commonly used to approximate nonlinear systems via linearization (e.g., \cite{falcone2007}). 

Considering the uncertain behavior of the OV, the trajectory planning problem can be formulated as the following finite-horizon CCP:
    \begin{subequations} \label{problem:originalTP}
    \begin{alignat} {2}
    & \underset{\textbf{u}}{\text{min}}
    & \quad & f(x_0, \textbf{u}) \\
    & \text{s.t.}
    & \quad & \mathbb{P}_{\delta_{ij}^t \sim p_{ij*}^t} (\bigwedge_{t=1}^{T} \bigwedge_{j=1}^{J}  \bigvee_{i=1}^{I_j} {\delta_{ij}^t}^{\intercal} \Tilde{x}_t \leq 0) \geq 1-\epsilon. \label{constraint:originalconstraint} \\ 
    & &\quad& \text{\textbf{x}, \textbf{u} satisfy \eqref{eq:systemeq} with initial state } x_0. \label{constraint:initialState}
    \end{alignat}
    \end{subequations}
        
The uncertain parameter $\delta_{ij}^t$ corresponds to the $j^{th}$ OV's $i^{th}$ face at time $t$. The constraint enforces the EV to be away from at least one of the $I_j$ edges of the $j^{th}$ OV. To overcome the intractability of disjunction and conjunctions in chance constraint \eqref{constraint:originalconstraint}, we exploit the Big-M method \cite{Schouwenaars2001} and the Boole's inequality \cite{CaseBerg:01} to conservatively approximate \eqref{constraint:originalconstraint} as
    \begin{subequations} \label{constraint:allToSingleCC}
    \begin{alignat} {2}
        & \bigwedge_{t=1}^{T} \bigwedge_{j=1}^{J}  \bigwedge_{i=1}^{I_j} \mathbb{P}_{\delta_{ij}^t \sim p_{ij*}^t} ({\delta_{ij}^t}^{\intercal} \Tilde{x}_t + M z_{ij}^{t} \leq 0) \geq 1-\epsilon_{ij}^{t}, \label{MPconstraint:allConjunctionCC} \\
        & \sum_{i=1}^{I_j} z_{ij}^{t} = 1, \;\; z_{ij}^{t} \in \{0, 1\}, \label{MPconstraint:singleBinary} \\
        & \epsilon_{ij}^t = \epsilon/(TJ), \label{MPconstraint:riskConstraint}
    \end{alignat}
    \end{subequations}
where $z_{ij}^{t}$ is a binary variable to enforce one of the disjunctive constraints, i.e. the EV is away from one edge of the OV. For simplicity, the risk bound $\epsilon$ is uniformly assigned among OVs and time steps \cite{Lefkopoulos2021}. Other heuristics \cite{jha-jar18} or optimization-based \cite{Blackmore2011,blackmore2009} approaches can also be incorporated. 

Past works \cite{blackmore2006, Blackmore2011} considered $p_{ij*}^t$ as a Gaussian distribution, which cannot model the multimodal behaviors of the OVs. Thus, we extend $p_{ij*}^t$ to GMM with $K_j$ modes, where each mode is a Gaussian denoted as $p_{ijk}^t = \mathcal{N}(\mu_{ijk}^{t}, \Sigma_{ijk}^{t})$.

In the state-of-the-art trajectory prediction model \cite{Salzmann2020CoRR}, each prediction of future OV trajectories incorporates a latent variable that encodes the multimodal intents (i.e. going straight or turning). This provides the mode each sample trajectory belongs to. The number of the latent variables and the probability corresponding to each latent variable provides us the number of modes and the weight of each mode. The number and values of the latent variables remain the same throughout the prediction horizon, which motivate the following assumption:
    \begin{assumption} \label{assumption:clusternum}
        The number of modes $K_j$ and the weights $\{\pi_1, \dots, \pi_{K_j}\}$ for $p_{ij*}^t$ remain the same for all $t \in \mathbb{Z}_{0:T-1}$ and for all faces $i \in \mathbb{Z}_{1:I_j}$ of the $j^{th}$ OV.
    \end{assumption}
       
Based on Assumption \ref{assumption:clusternum}, \eqref{MPconstraint:allConjunctionCC} and \eqref{MPconstraint:singleBinary} can be further reformulated as follows.
    \begin{subequations} \label{constraint:gmmsingleTP}
        \begin{alignat} {2}
            &  \mathbb{P}_{\delta_{ij}^t \sim p_{ijk}^{t}} ( {\delta_{ijk}^t}^{\intercal} \Tilde{x}_t + M z_{ijk}^{t} \leq 0) \geq 1-\epsilon_{ijk}^{t}, \label{MPconstraint:gmmsingleCC}\\
            & \sum_{k=1}^{K_j} \pi_k \epsilon_{ijk}^{t} = \epsilon_{ij}^{t}, \;\;\; \sum_{k=1}^{K_j} \pi_k = 1 \label{MPconstraint:gmmSingleWeight} \\
            & \sum_{i=1}^{I_j} z_{ijk}^{t} = 1, \;\;\; z_{ijk}^{t} \in \{0, 1\}. \label{MPconstraint:gmmSingleBinary}
        \end{alignat}
    \end{subequations}

The constraint \eqref{MPconstraint:gmmsingleCC} need to be satisfied for all $i, j, k$ and $t$ throughout the planning horizon. For GMM uncertain parameter, we need to assign the risk bound $\epsilon_{ij}^{t}$ to enforce \eqref{MPconstraint:gmmSingleWeight}. A simple method is the uniform risk allocation (URA), where $\epsilon_{ijk}^{t} = \epsilon_{ij}^{t}$ for all $k \in \mathbb{Z}_{1:K_j}$
\footnote{An optimal risk allocation (ORA) assigns a higher $\epsilon_k$ to a GMM mode $k$ that poses higher threats to the constraint. One can use a Branch and Bound (B\&B) method \cite{Land1960} to get ORA \cite{Hu2022}, but it is computationally heavy and disallows online application. We applied B\&B for ORA in our trajectory planning simulations, and observed that the ORA improves minimally compared to the URA. When the URA planner is infeasible, it is also infeasible with the ORA.}. For the simulations in the paper, we allocate the risk bound based on the URA.
        
\subsection {Deterministic reformulation \& Moment robustification}
        
We now reformulate the chance constraint \eqref{MPconstraint:gmmsingleCC} and its CVaR approximation as deterministic constraints based on Lemma \ref{lemma:gmmreformulation} and Theorem \ref{theorem:gmmrobustification}. 
        
\begin{theorem}\label{theorem:single_cc}[Safe trajectory planning with GMM]
\begin{enumerate} [leftmargin=14pt]
    \item \label{subtheorem:knownMoments}
Given the moments ($\mu_{ijk}^{t}, \Sigma_{ijk}^{t}$) of the uncertain parameters for all $t, j, i \text{ and } k$ throughout the planning horizon, problem \eqref{problem:originalTP} and the CVaR approximation can be conservatively reformulated as the following mixed-integer second-order cone program: 
    \begin{subequations} \label{MPproblem:gmmDeterministic}
    \begin{alignat} {2}
    & \underset{\textbf{u}, \; z_{ijk}^t}{\text{min}}
    & \quad & f(x_0, \textbf{u}) \\
    & \text{s.t.}
    & & \Gamma_{ijk}^{t} \sqrt{ \Tilde{x}_t^{\intercal}{\Sigma_{ijk}^{t}} \Tilde{x}_t} + \Tilde{x}^{\intercal}\mu_{ijk}^{t} \leq Mz_{ijk}^{t}, \;\;\; \forall t,\; j,\; i,\; k. \label{MPconstraint:gmmDeterministic}\\ 
    & &\quad& \eqref{constraint:initialState},\; \eqref{MPconstraint:riskConstraint},\; \eqref{MPconstraint:gmmSingleWeight},\; \eqref{MPconstraint:gmmSingleBinary}, \label{MPconstraint:gmmDeterministicAllothers}
    \end{alignat}
    \end{subequations}
with $\Gamma_{ijk}^{t} = \text{\textPsi}^{-1}(1-\epsilon_{ijk}^{t})$ for the chance constraint, and $\Gamma_{ijk}^{t}=\Phi[\text{\textPsi}^{-1}(1-\epsilon_{ijk}^{t})]/\epsilon_{ijk}^{t}$ for the CVaR constraint. A feasible solution of \eqref{MPproblem:gmmDeterministic} is also feasible to \eqref{problem:originalTP}.

\item  Consider the case in which the moments are estimated from samples. We denote the estimated moments as ($\hat{\mu}_{ijk}^{t}, \hat{\Sigma}_{ijk}^{t}$). A solution that satisfies the following  and \eqref{MPconstraint:gmmDeterministicAllothers} is feasible to \eqref{problem:originalTP} with probability $1-2\beta TJ$.
    \begin{equation}
    \begin{aligned}
        & \Gamma_{ijk}^{t}\sqrt{(1+r_{2, ijk}^t)(\Tilde{x}_t^{\intercal}\hat{\Sigma}_{ijk}^{t}\Tilde{x}_t)} + r_{1, ijk}^t (x_t) \\ 
        & \;\;\;\;\;\;\;\;\;\;\; + \Tilde{x}^{\intercal}\hat{\mu}_{ijk}^{t} \leq Mz_{ijk}^{t}, \;\;\; \forall t, j, i \text{ and } k. \label{MPconstraint:gmmMRAdeterministic} 
    \end{aligned}
    \end{equation}
    Here, $r_{1, ijk}^t (x_t)$ and $r_{2, ijk}^t$ are defined in \eqref{eq:1dMeanCbound} and \eqref{eq:covarianceCbound}, with $N_k$ denoting the sample size of $\delta_{ijk}^{t}$. Also, $\Gamma_{ijk}^{t}$ is defined for the chance and CVaR constraints as in \ref{subtheorem:knownMoments}).
    \end{enumerate}
    \end{theorem}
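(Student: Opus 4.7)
The plan is to chain together the deterministic reformulation from Lemma \ref{lemma:gmmreformulation} and the moment robustification from Theorem \ref{theorem:gmmrobustification}, and then manage the union bounds across time steps, obstacles, faces, and GMM modes. For Part \ref{subtheorem:knownMoments}, I would start from the original chance constraint \eqref{constraint:originalconstraint}, and track the conservative approximations that have already been set up in the text: applying Boole's inequality together with the Big-M encoding gives the collection of conjunctive per-edge chance constraints \eqref{MPconstraint:allConjunctionCC}--\eqref{MPconstraint:riskConstraint}, and decomposing each mixture yields \eqref{MPconstraint:gmmsingleCC}--\eqref{MPconstraint:gmmSingleBinary} under Assumption \ref{assumption:clusternum}. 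For each fixed $(t,j,i,k)$, Assumption \ref{assumption:guassianConstraint} guarantees that the constraint function is affine Gaussian, so Lemma \ref{lemma:gmmreformulation} applies with $\tilde{x}_t$ in place of $\tilde{x}$ and the shifted affine term ${\delta_{ijk}^t}^{\intercal}\tilde{x}_t + Mz_{ijk}^t$ in place of $\delta^{\intercal}\tilde{x}$. This yields precisely \eqref{MPconstraint:gmmDeterministic} with the stated $\Gamma_{ijk}^t$ for the chance and CVaR cases, respectively. The ``feasible $\Rightarrow$ feasible'' claim then follows by reversing the chain of implications, noting that each approximation step is one-directional (Big-M and Boole only introduce conservatism).

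For Part 2, the argument is identical in structure, but Lemma \ref{lemma:gmmreformulation} is replaced by Theorem \ref{theorem:gmmrobustification}. For each mode $k$, applying Theorem \ref{theorem:gmmrobustification} to the one-dimensional affine functional $\delta_{ijk}^{t\,\intercal}\tilde{x}_t$ gives that constraint \eqref{MPconstraint:gmmMRAdeterministic} implies the corresponding exact reformulation \eqref{MPconstraint:gmmDeterministic} with probability at least $1-2\beta$ (the union of the mean-bound event and the covariance-bound event, each of probability $\beta$, is already folded into the statement of Theorem \ref{theorem:gmmrobustification}). The key step is then to aggregate across indices: inside a fixed $(t,j)$ block the $K_j$ mode-wise guarantees compose via the weighting $\sum_k \pi_k(1-2\beta) = 1-2\beta$ as in the proof of Theorem \ref{theorem:gmmrobustification}, while the index $i$ is not probabilistic because the binary $z_{ijk}^t$ picks exactly one active face under \eqref{MPconstraint:gmmSingleBinary}. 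A single application of Boole's inequality then handles the conjunction over the $TJ$ pairs $(t,j)$, giving an overall probability at least $1 - 2\beta TJ$ that the robustified solution satisfies \eqref{MPconstraint:gmmDeterministic} for every $(t,j,i,k)$, which in turn implies feasibility of the original problem \eqref{problem:originalTP} by Part \ref{subtheorem:knownMoments}.

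The main obstacle I anticipate is bookkeeping rather than any deep technicality: clearly separating the \emph{deterministic} reductions (Big-M, Boole, mixture decomposition, SOC reformulation of each Gaussian chance/CVaR constraint) from the \emph{probabilistic} reductions (the per-mode $1-2\beta$ guarantee and the subsequent union bound over $(t,j)$), and making sure the probability accounting is not double-counted. In particular, one must argue that the $i$-disjunction does not inflate the probability bound, because the integer variables $z_{ijk}^t$ deterministically commit to a single face $i$, so the robustification only needs to hold for the chosen face; and that the mode index $k$ contributes a factor of $1$ rather than $K_j$ because of the weighting argument used inside Theorem \ref{theorem:gmmrobustification}. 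Once these two subtleties are handled, the rest is routine substitution using the definitions of $r_{1,ijk}^t$ and $r_{2,ijk}^t$ from \eqref{eq:1dMeanCbound} and \eqref{eq:covarianceCbound}.
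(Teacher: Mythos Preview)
Your proposal is correct and follows essentially the same route as the paper's proof: chain the reductions \eqref{constraint:originalconstraint} $\Leftarrow$ \eqref{constraint:allToSingleCC} $\Leftrightarrow$ \eqref{constraint:gmmsingleTP}$\wedge$\eqref{MPconstraint:riskConstraint} $\Leftrightarrow$ \eqref{MPconstraint:gmmDeterministic}$\wedge$\eqref{MPconstraint:gmmDeterministicAllothers} via Lemma~\ref{lemma:gmmreformulation} for Part~\ref{subtheorem:knownMoments}, and for Part~2 invoke Theorem~\ref{theorem:gmmrobustification} per $(t,j)$ block and union-bound over the $TJ$ blocks. Your discussion of why the indices $i$ and $k$ do not inflate the probability count is in fact more explicit than what the paper provides (the paper simply asserts ``for all $TJ$ constraints'' without justifying the absence of the $i$-factor), so your bookkeeping is, if anything, a modest improvement in clarity; the only cosmetic difference is that the paper writes the final bound as $(1-2\beta)^{TJ} > 1-2\beta TJ$ rather than invoking Boole directly.
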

    
    \begin{proof}
        When the GMM moments are known, the uncertain parameter $\delta_{ijk}^{t}$ conforms to $\mathcal{N}(\mu_{ijk}^{t}, \Sigma_{ijk}^{t})$ for each GMM mode. Based on Lemma \ref{lemma:gmmreformulation}, each single chance constraint in \eqref{MPconstraint:gmmsingleCC} can be reformulated as \eqref{MPconstraint:gmmDeterministic}. Thus, a solution that satisfies \eqref{MPconstraint:gmmDeterministic} and \eqref{MPconstraint:gmmDeterministicAllothers} conservatively satisfies \eqref{constraint:originalconstraint} because \eqref{constraint:originalconstraint} $\Leftarrow$ \eqref{constraint:allToSingleCC} $\Leftrightarrow$ \eqref{constraint:gmmsingleTP}$\wedge$\eqref{MPconstraint:riskConstraint} $\Leftrightarrow$  \eqref{MPconstraint:gmmDeterministic}$\wedge$\eqref{MPconstraint:gmmDeterministicAllothers}. When the GMM moments are estimated from samples, approximating \eqref{MPconstraint:gmmDeterministic} with \eqref{MPconstraint:gmmMRAdeterministic} is a direct application of Theorem \ref{theorem:gmmrobustification}.
        
        From Theorem \ref{theorem:gmmrobustification}, for each $i, j$ and $t$, a solution that satisfies \eqref{MPconstraint:gmmMRAdeterministic} is feasible to \eqref{MPconstraint:gmmsingleCC} with probability $1-2\beta$. If the constraints are satisfied jointly for all $TJ$ constraints (i.e. for all OVs at all time steps), the solution is feasible to \eqref{problem:originalTP} with a probability of at least $(1-2\beta)^{TJ} > 1-2\beta TJ$ \cite{Lefkopoulos2021}.
    \end{proof}
        
Problem \eqref{MPproblem:gmmDeterministic} and its robustified approximation for sample-estimated moments (i.e. when approximating \eqref{MPconstraint:gmmDeterministic} with \eqref{MPconstraint:gmmMRAdeterministic}) are mixed-integer second-order cone programs, which are computationally tractable and can be efficiently solved via off-the-shelf optimization solvers, such as CPLEX \cite{cplex2009v12}.

\section{Simulations and Results} \label{section:casestudy}
 
In this section, we test our methods on real-world trajectory planning examples.\footnote{The code used for the simulations is available at \href{https://github.com/renkai99/LCSS-control}{\texttt{LCSS-control}}.} We compare the performances of our methods by modelling the uncertain parameter's distribution with Gaussian uni-model (GUM) and GMM. All computations were conducted on an Intel i7 CPU at 2.60 GHz with 8 GB of memory using YALMIP \cite{Lofberg2004} and CPLEX \cite{cplex2009v12}.

\begin{figure}[!t] 
    \centering
    \includegraphics[width=0.49\textwidth]{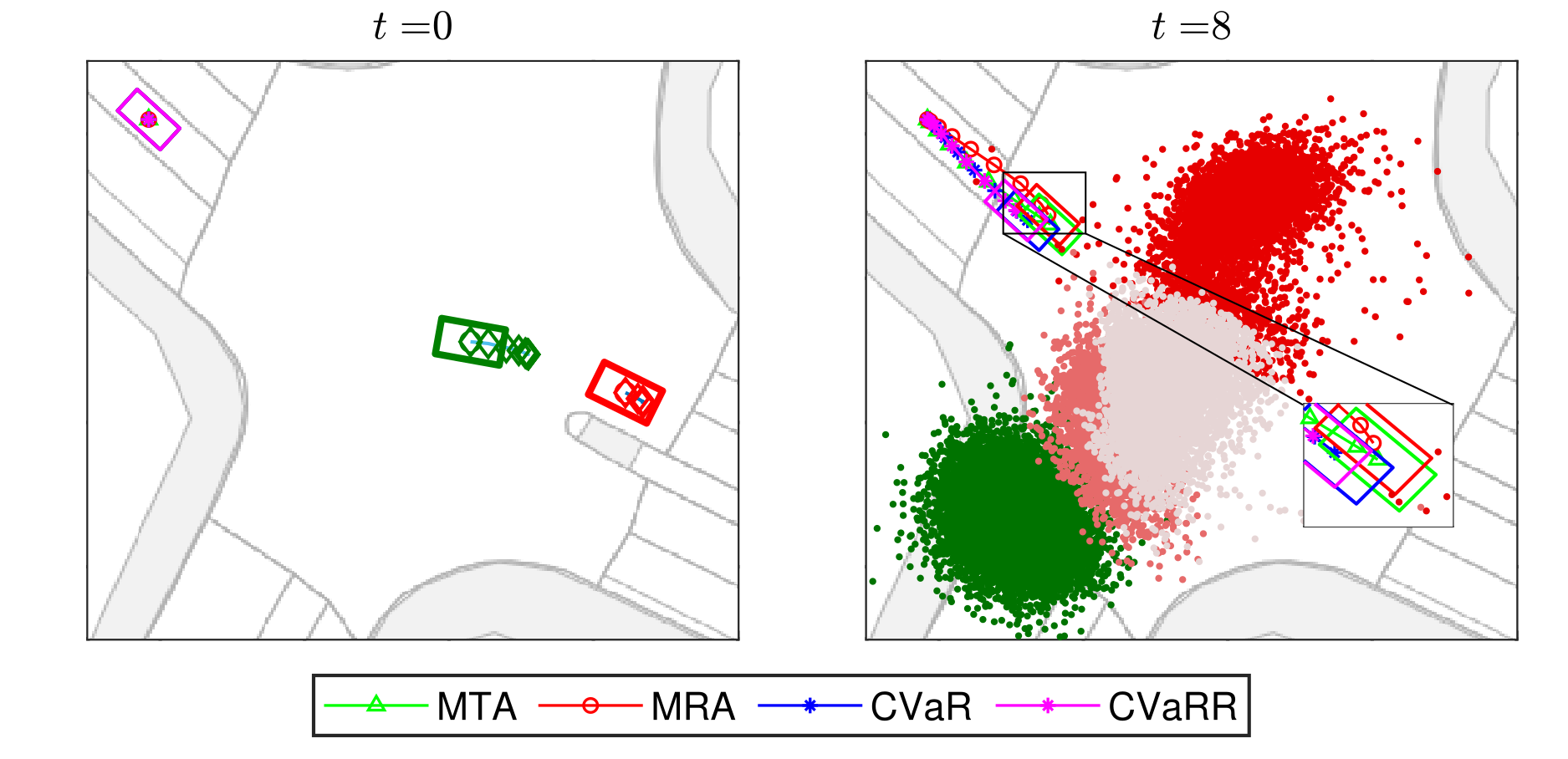}
    \vspace{-3.5mm}
    \caption{EV Trajectories in scene 1 with MTA, MRA, CVaR and CVaRR trajectory planners.}
    \label{fig:nuScenes1} \vspace{-1mm}
\end{figure}

\begin{table}[!b]
    \caption{Scene 1 Planning Results} \vspace{-2mm}
    \begin{center} 
    \begin{tabular}{| c | c | c | c | c |}
        \hline
        & \textbf{Method} & \textbf{Cost} & \textbf{VRate} & \textbf{$\mathbb{E}\{\text{VAmt}\}$} (m) \\ \hline
        \multirow{5}{*}{\shortstack[c]{Multimodal\\Modelling}}  
        & MTA & -11.04 & 0.045\% & 0.0448\\ \cline{2-5}
        & MRA & -10.02 & 0.018\% & 0.0377\\ \cline{2-5}
        & CVaR & -9.61 & 0.015\% & 0.0401\\ \cline{2-5}
        & CVaRR & -8.52 & 0.004\% & 0.0097 \\ \cline{2-5}
        & Scenario \cite{Ahn_2022} & -1.62 & 0 & 0 \\ \hline
        \rule{0pt}{15pt} \shortstack{Unimodal\\Modelling} & \shortstack{\vspace{1mm}All methods} & \shortstack{\vspace{1mm}$\infty$} & \shortstack{\vspace{1mm} \revised{-}} & \shortstack{\vspace{1mm} \revised{-}} \\ \hline
    \end{tabular}
    \end{center}
    \label{tab:nuScene1} 
\end{table}
    
We consider the same problem as case study 2 in \cite{Ahn_2022}, which applies the trajectory prediction neural network \textit{Trajectron++} \cite{Salzmann2020CoRR} on a real-world autonomous driving \textit{nuScenes} dataset \cite{nuScenes}. The \textit{Trajectron++} model predicts $N_s$ sample future trajectories of the OVs with a discrete latent variable encoding the high-level intents corresponding to each sample trajectory. We model the distribution of the uncertain parameter with GUM and GMM respectively. For GMM modelling, each latent variable corresponds to a GMM mode. At each time step, we estimate the moments of the uncertain parameter based on the $N_s$ samples. For both schemes, we first plan EV motions based on \eqref{MPproblem:gmmDeterministic} assuming the sample estimated moments are accurate. The chance-constrained formulation and the CVaR approximation are named as Moment Trust Approach (MTA) and CVaR approach (CVaR) respectively. Then we plan EV motions based on Theorem \ref{theorem:single_cc} with moment robustification \eqref{MPconstraint:gmmMRAdeterministic}. The chance-constrained formulation and the CVaR approximation are named as Moment Robust Approach (MRA) and CVaR Robust (CVaRR) approach respectively.

Two cross-intersection scenarios are investigated. Fig. \ref{fig:nuScenes1} and Fig. \ref{fig:nuScenes2} show the initial configuration of the two scenes at $t=0$. In scene 1, the magenta EV enters the intersection and two OVs (red and green) come from the opposite direction. The future trajectories of the red OV exhibit three modes (red, dark pink, and light pink). In scene 2, two OVs (red and green) enters the intersection from the opposite and the same directions of the magenta EV respectively. The future trajectories of the red OV exhibit two modes (red and dark pink). The EV is modelled as a double-integrator and the safe distance to the centroid of the EV is a half-diagonal length of the EV. The OVs are considered as rectangular obstacles. An OV rectangle represents an enlarged size of the OV to include a safe distance of 0.1 meters. Thus, the EV should stay outside the 0.1 meters protective distance. For all scenarios, we define the risk bound $\epsilon = 0.05$ and safety tolerance $\beta = 10^{-3}$. The performance of the planners are evaluated on 3 criterion: a cost value that we intend to minimize (smaller costs mean the EV makes more progress in its longitudinal direction and has less lateral displacement and velocity), a rate of constraint violation (denoted as \textbf{VRate}) and an expected violation amount (i.e. the average distance of the EV crashing into OV, which is denoted as \textbf{$\mathbb{E}\{\text{VAmt}\}$}). The violation rate and expected violation amount are evaluated based on $10^5$ new predictions from the \textit{Trajectron++} model. The results for scene 1 and scene 2 are shown in Table \ref{tab:nuScene1} and Table \ref{tab:nuScene2} respectively. The terminal positions of the EV at the end of the planning horizon $t=8$ in scene 1 and scene 2 are shown in Fig. \ref{fig:nuScenes1} and Fig. \ref{fig:nuScenes2} respectively.

\begin{figure}[!t] 
    \centering
    \includegraphics[width=0.49\textwidth]{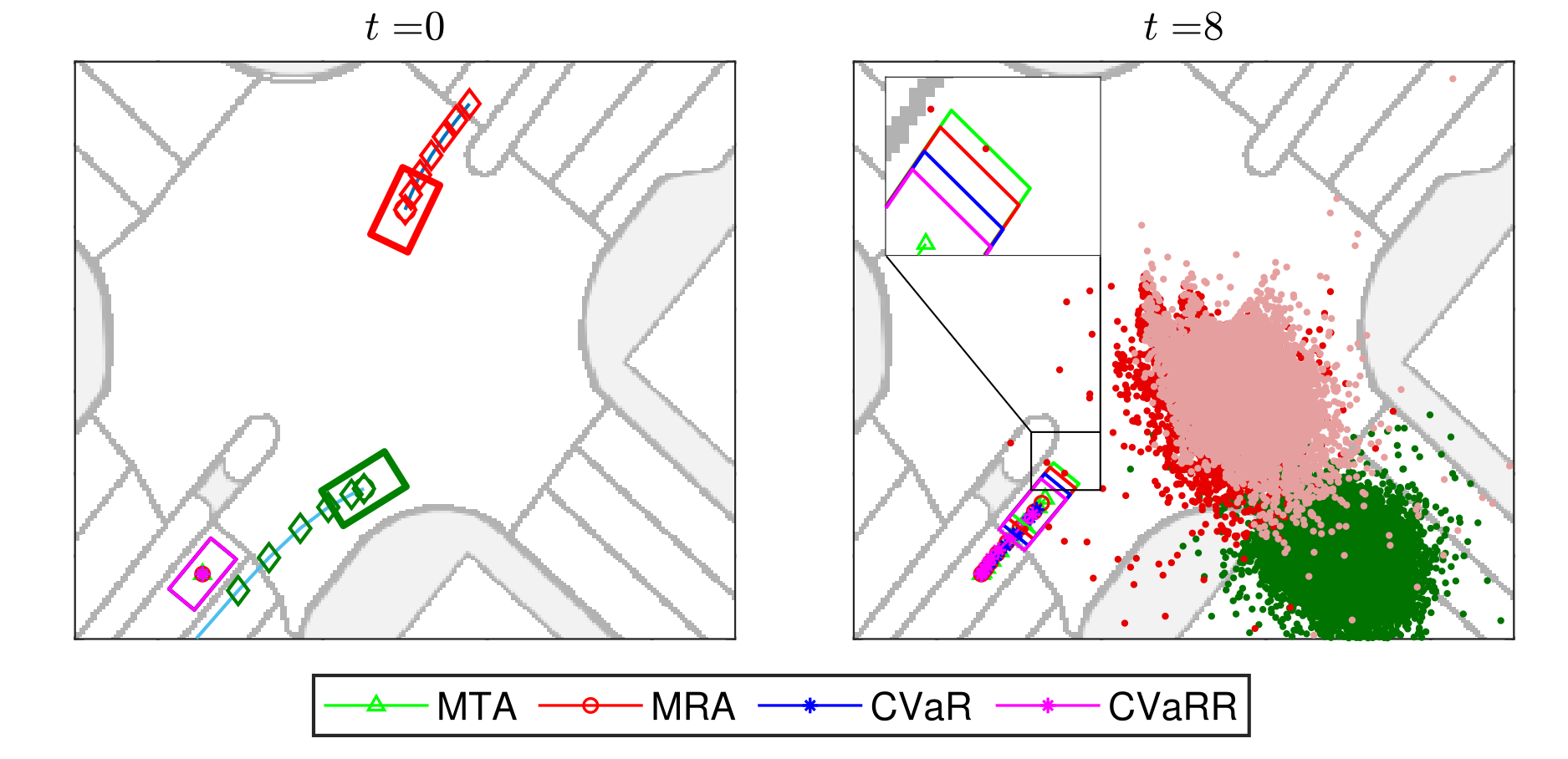}
    \vspace{-3.9mm}
    \caption{EV Trajectories in scene 2 with MTA, MRA, CVaR and CVaRR trajectory planners.}
    \label{fig:nuScenes2} \vspace{-1mm}
\end{figure}

\begin{table}[b] \vspace{-1mm}
    \caption{Scene 2 Planning Results}
    \begin{center} \vspace{-2mm}
    \begin{tabular}{| c | c | c | c | c |} 
        \hline
        & \textbf{Method} & \textbf{Cost} & \textbf{VRate} & \textbf{$\mathbb{E}\{\text{VAmt}\}$} (m) \\ \hline
        \multirow{5}{*}{\shortstack[c]{Multimodal\\Modelling}}  
        & MTA & -5.99 & 0.334\% & 0.0374\\ \cline{2-5}
        & MRA & -5.63 & 0.224\% & 0.0343\\ \cline{2-5}
        & CVaR & -5.13 & 0.131\% & 0.0361 \\ \cline{2-5}
        & CVaRR & -4.75 & 0.085\% & 0.0283 \\ \cline{2-5}
        & Scenario \cite{Ahn_2022} & -1.47 & 0 & 0 \\ \hline
        \rule{0pt}{15pt} \shortstack{Unimodal\\Modelling} & \shortstack{\vspace{1mm}All methods} & \shortstack{\vspace{1mm}$\infty$} & \shortstack{\vspace{1mm} \revised{-}} & \shortstack{\vspace{1mm} \revised{-}} \\ \hline
    \end{tabular}
    \end{center}
    \label{tab:nuScene2}
\end{table} 
        
\textbf{Optimality:} By modelling the uncertain parameter's distribution as unimodal Gaussian, all the planners become infeasible. On the other hands, the GMM modelling enables feasible solutions for all the planners. As CVaR conservatively approximates chance constraint, CVaR (CVaRR) planner is more conservative than MTA (MRA) as expected. This can be seen from the progress of the EV in the longitudinal direction at the end of the planning horizon $t=8$, which is shown by the magnified insets in Fig. \ref{fig:nuScenes1} and Fig. \ref{fig:nuScenes2}.

\textbf{Risk level:} The probability of constraint violation for all planners are below the predefined threshold $\epsilon=0.05$. 

\textbf{Computational Time:} Within 1.5 s, the MTA, MRA and CVaR planners can output an EV trajectory for the 4 s horizon. The CVaRR planner takes 4.25 s to yield a trajectory. Reducing the computational time for CVaRR remains as a future work. 

\textbf{Expected Violation:} The CVaRR planner yields the least expected amount of violation when the constraints are violated. Fig. \ref{fig:worstcasecollision} shows a single case where the output trajectories of all planners collide with the OV. When the constraint is violated, the CVaRR planner has the minimum amount of violation.

\textbf{Comparison to past works:} We implement MRA and CVaRR planners with the mean robustification \eqref{eq:ndMeanRobustification} proposed in \cite{Lefkopoulos2021}, which produced infeasible solutions even with GMM modelling. However, our approach \eqref{eq:1dMeanRobustification} enables feasible motions. We also compare our results with the scenario-based approach proposed in \cite{Ahn_2022}. Our methods produces trajectories that yield significantly smaller cost values.

\begin{figure}[!t] 
    \centering
    \includegraphics[width=0.46\textwidth]{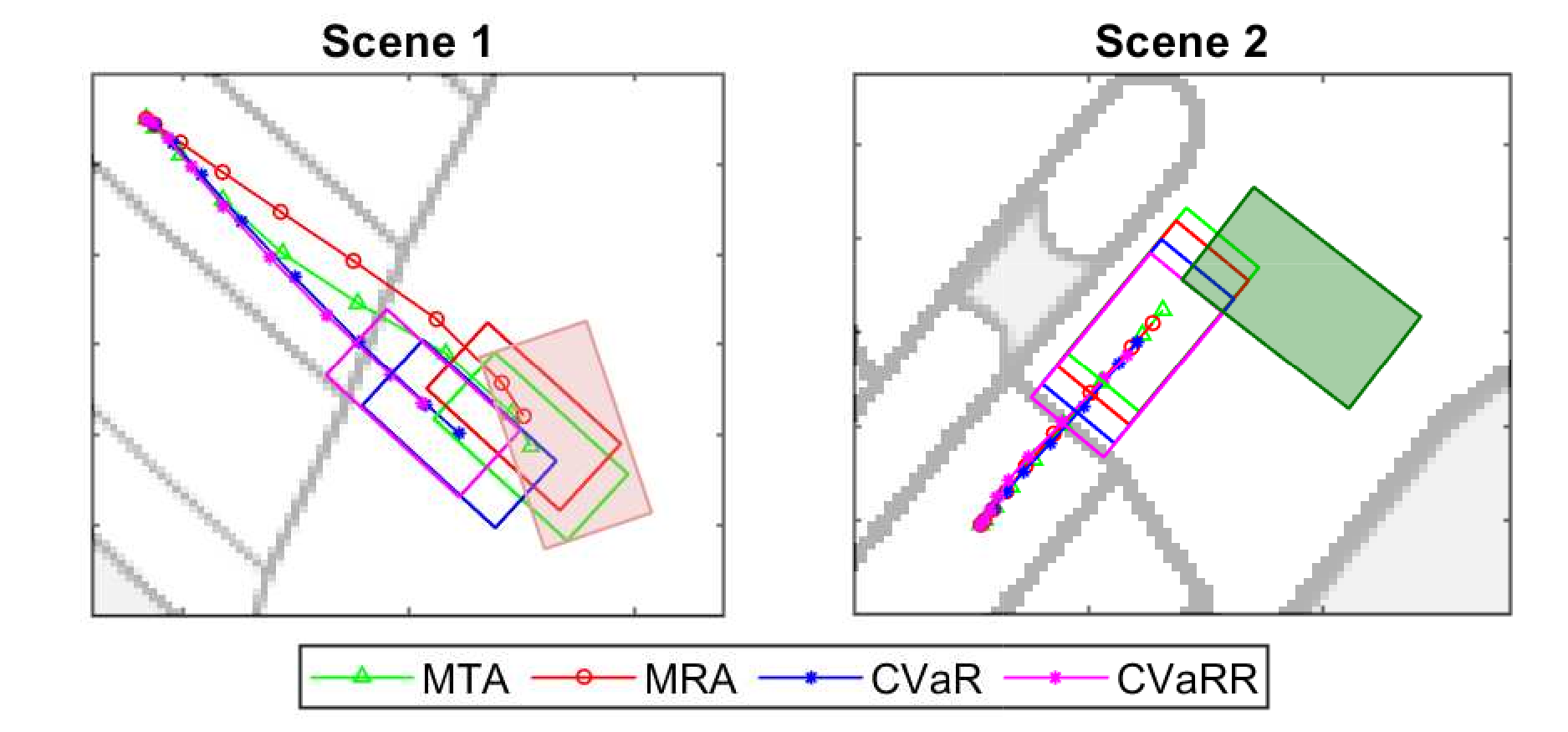} \vspace{-1mm}
    \caption{Worst-case collisions for MTA, MRA, CVaR and CVaRR trajectory planners.}
    \label{fig:worstcasecollision} 
\end{figure}

\section{Conclusion}
We presented a trajectory planner that considers multimodal uncertain obstacles and time-varying linear systems. We modeled the distribution of the uncertain parameter by GMM, motivated by the state-of-the-art trajectory prediction algorithms. This enables deterministic reformulations of chance and CVaR constraints. When the GMM moments are estimated from samples, we derived a tight bound on the estimation error to robustify the constraints. The  robustification probabilistically guarantees the feasibility of chance and CVaR constraints with a finite number of samples. We demonstrated our approaches in real-world autonomous driving examples and showed that modelling the uncertain parameter's distribution with GMM yields less conservative trajectories than modelling the uncertainty's distribution with unimodal Gaussian or using scenario-based approach. Our methods also ensured that the constraint violation rate is bounded by the threshold. Also, the CVaR-constrained planner can limit the expected violation amount when the constraints are violated. Currently, we are extending our work to conduct closed-loop trajectory planning.
    
\bibliographystyle{IEEEtran}
\bibliography{main}

\end{document}